\documentclass[letterpaper, 10 pt, journal, twoside]{IEEEtran}
\ifCLASSINFOpdf
  \usepackage[pdftex]{graphicx}
\else
  \usepackage[dvips]{graphicx}
\fi
%
%

%
\usepackage{amsmath,amsfonts,amssymb}
\usepackage{amsthm}  {
      \theoremstyle{plain}
      \newtheorem{definition}{Definition}
      \newtheorem{lemma}{Lemma}
      \newtheorem{theorem}{Theorem}

      \newtheorem{proposition}{Proposition}
      
}
%

%

\usepackage{algpseudocode}
\usepackage{algorithm}

%


\ifCLASSOPTIONcompsoc
 \usepackage[caption=false,font=normalsize,labelfont=sf,textfont=sf]{subfig}
\else
 \usepackage[caption=false,font=footnotesize]{subfig}
\fi
\usepackage{nicefrac}

\usepackage{booktabs}
\usepackage{multirow}

\usepackage[version-1-compatibility]{siunitx}
\sisetup{detect-all}

\DeclareMathOperator*{\argmax}{argmax}

\pdfminorversion=4

\hyphenation{op-tical net-works semi-conduc-tor}

\begin{document}
%
\title{Multi-Sensor Next-Best-View Planning as Matroid-Constrained Submodular Maximization}
\author{Mikko Lauri$^{1}$, Joni Pajarinen$^{2}$, Jan Peters$^{3}$, and Simone Frintrop$^{1}$
\thanks{Manuscript received: February 24, 2020; Revised May 22, 2020; Accepted June 19, 2020.}
\thanks{This paper was recommended for publication by Editor Tamim Asfour upon evaluation of the Associate Editor and Reviewers' comments.
This work was supported by European Research Council Grant No. 640554 (SKILLS4ROBOTS) and German Research Foundation project PA 3179/1-1 (ROBOLEAP).} 
\thanks{$^{1}$Mikko Lauri and Simone Frintrop are with Department of Informatics, University of Hamburg, Germany
        {\tt\footnotesize \{lauri,frintrop\}@informatik.uni-hamburg.de}}%
\thanks{$^{2}$Joni Pajarinen is with Intelligent Autonomous Systems lab, TU Darmstadt, Germany and with Tampere University, Finland
        {\tt\footnotesize pajarinen@ias.tu-darmstadt.de}}%
\thanks{$^{3}$Jan Peters is with Intelligent Autonomous Systems lab, TU Darmstadt, Germany and with Max Planck Institute for Intelligent Systems, Germany
  {\tt\footnotesize peters@ias.tu-darmstadt.de}}%
}
%
%

\markboth{IEEE Robotics and Automation Letters. Preprint Version. Accepted June, 2020}
{
Multi-Sensor Next-Best-View Planning as Matroid-Constrained Submodular Maximization} 

%



\maketitle

\begin{abstract}
3D scene models are useful in robotics for tasks such as path planning, object manipulation, and structural inspection.
We consider the problem of creating a 3D model using depth images captured by a team of multiple robots.
Each robot selects a viewpoint and captures a depth image from it, and the images are fused to update the scene model.
The process is repeated until a scene model of desired quality is obtained.
Next-best-view planning uses the current scene model to select the next viewpoints.
The objective is to select viewpoints so that the images captured using them improve the quality of the scene model the most.
In this paper, we address next-best-view planning for multiple depth cameras.
We propose a utility function that scores sets of viewpoints and avoids overlap between multiple sensors.
We show that multi-sensor next-best-view planning with this utility function is an instance of submodular maximization under a matroid constraint.
This allows the planning problem to be solved by a polynomial-time greedy algorithm that yields a solution within a constant factor from the optimal.
We evaluate the performance of our planning algorithm in simulated experiments with up to 8 sensors, and in real-world experiments using two robot arms equipped with depth cameras.
\end{abstract}

\begin{IEEEkeywords}
Reactive and Sensor-Based Planning, RGB-D Perception, Multi-Robot Systems
\end{IEEEkeywords}

%
\IEEEpeerreviewmaketitle

\section{Introduction}
\label{sec:intro}
\IEEEPARstart{S}{cene} reconstruction is the process of creating a digital model of a real-world scene from a set of images or other measurements of the scene.
Models obtained via scene reconstruction are useful for robotic applications such as object manipulation, structural inspection~\cite{Quenzel2019}, and waste sorting (Fig.~\ref{fig:online_scene}).
In an online setting, the model reconstructed from the images captured so far is used to plan from which viewpoint the next image is captured.
Next-best-view (NBV) planning~\cite{Connolly1985} determines the next viewpoint that provides the greatest improvement to the quality of the current model, reducing the amount of time and number of images required to reconstruct a model of desired quality.

Many tasks may benefit from deployment of a multi-robot team~\cite{Best2018}, however most approaches to NBV planning focus on the single-robot setting~\cite{Connolly1985,Banta2000,Kriegel2015,Vasquez-Gomez2017,delmerico2018comparison,Border2018}.
In a multi-robot setting these approaches would plan the viewpoint for each robot individually and ignore coordination between team members.
Time and resources are wasted if the same part of the scene is observed by multiple robots.

\begin{figure}[t]
  \centering
  \includegraphics[width=\columnwidth]{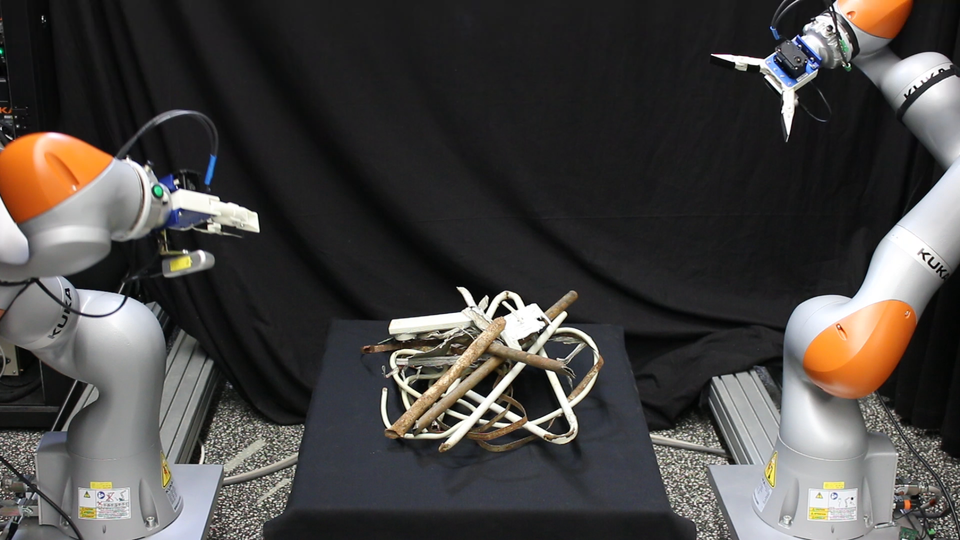}
  \includegraphics[width=0.65\columnwidth]{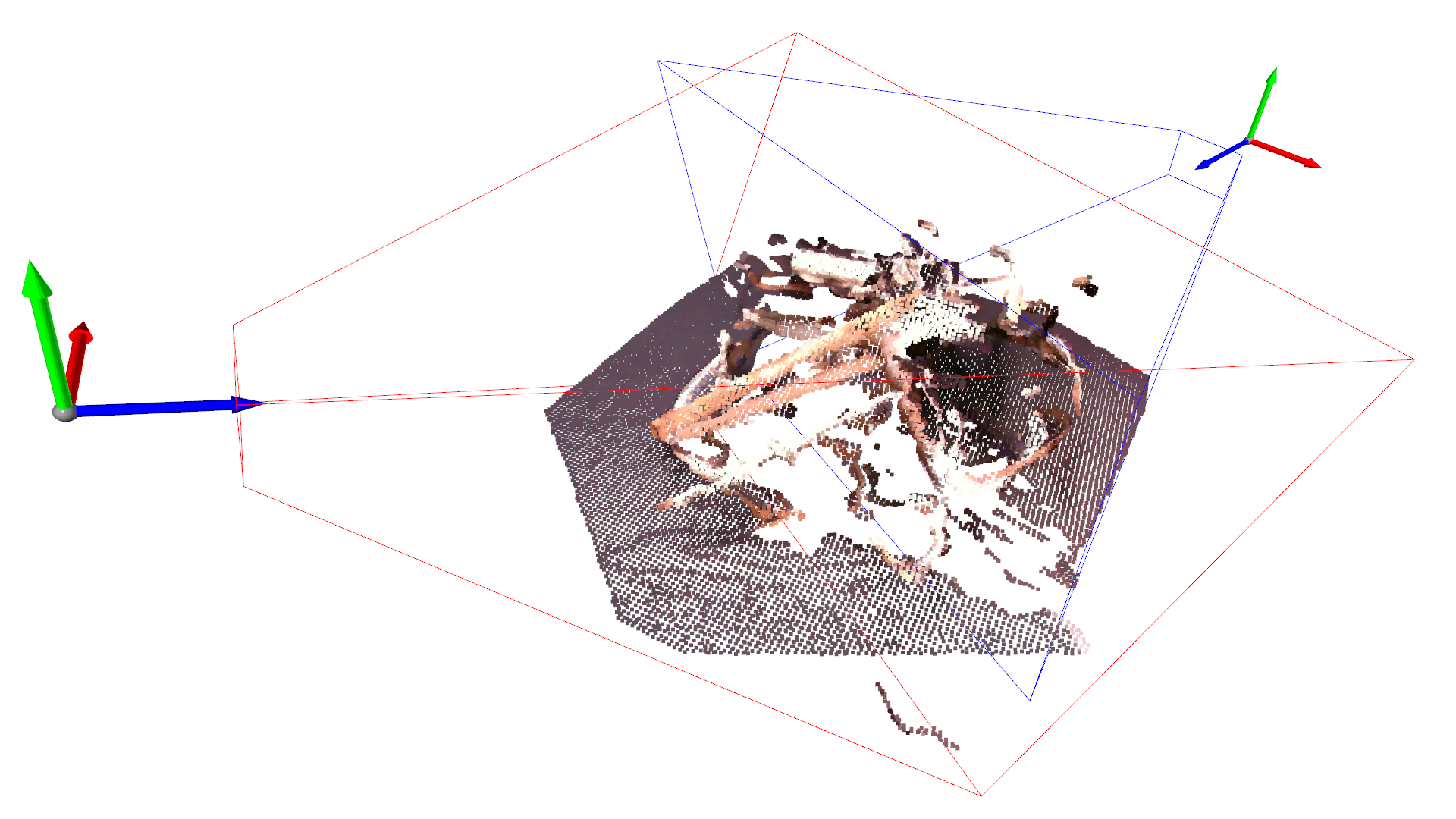}
  \caption{Above: two robots sorting waste. Below: The robots create a 3D model by recording images from views around the scene (red and blue frustums). We propose an efficient method to find the next-best-views for multiple cameras based on greedy maximization of a submodular utility function. The views found by our algorithm avoid overlap between the sensors' fields of view.}
  \label{fig:online_scene}
\end{figure}

In this paper, we propose an efficient method for multi-sensor NBV planning that coordinates view selection and avoids overlapping views (Fig.~\ref{fig:online_scene}).
The constraint that each robot can choose at most one view is formalized as a \emph{matroid}, a mathematical object that generalizes the concept of independence from linear algebra to sets.
View selection and other related sensor selection problems often satisfy a diminishing returns property known as \emph{submodularity}~\cite{Krause2008}: the benefit of adding a new view decreases with an increasing number of existing views.
An advantage of submodular maximization under a matroid constraint is that a polynomial-time greedy algorithm provides a solution within a constant factor from the optimum~\cite{fisher1978analysis}.

Unlike~\cite{Sukkar2019} who reconstruct regions of interest when communication is not constant, we aim to produce a complete scene reconstruction when robots communicate to coordinate their actions.
Our approach is best suited for robots deployed near each other (Fig.~\ref{fig:online_scene}) with reliable communication.
Cui et~~al.~\cite{Cui2019} consider a single robot equipped with many sensors, one of which is chosen to be active at any time.
In contrast, we consider joint planning where all sensors are operated simultaneously and the combination of views that is most useful is selected.
We apply matroid-constrained submodular maximization similar to the multi-robot planning methods in~\cite{Singh2009,Corah2019,Liu2019}.
We are the first to address multi-sensor NBV planning using submodularity.

Our key technical contribution is a utility function that avoids overlap between the views of multiple sensors.
We prove that the utility function is monotonically increasing and submodular, and formulate multi-sensor NBV planning as submodular maximization under a matroid constraint.
We describe an efficient greedy algorithm for the planning problem with an approximation guarantee.
When the sensors' potential views are disjoint, our formulation reduces to solving independent single-sensor NBV planning problems.
We experimentally verify the effectiveness of our proposed approach in a set of simulated experiments with up to 8 sensors, and in a real-world experiment with two robot arms.

The rest of the paper is organized as follows.
We review related work in Section~\ref{sec:related}, and define some useful mathematical concepts in Section~\ref{sec:preliminaries}.
We formulate the multi-sensor NBV problem in Section~\ref{sec:multi_sensor_next_best_view_planning}, and contrast it to single-sensor NBV planning.
In Section~\ref{sec:overlap_aware_utility_function_for_sets_emitted_rays} we introduce our proposed utility function, and prove that it is submodular.
Section~\ref{sec:greedy_maximization_for_multi_sensor_nbv_planning} proposes a greedy algorithm for multi-sensor NBV planning.
In Sections~\ref{sec:simulation_experiments} and~\ref{sec:real_world_experiments} we report results from simulated and real-world experiments.
Section~\ref{sec:conclusion} concludes the paper.

\section{Related work}
\label{sec:related}
We review related work in two areas: NBV planning in single and multi-sensor settings, and multi-robot planning using matroid constraints and submodularity.

\subsection{Next-best-view planning}
We focus on NBV planning approaches with a volumetric scene representation.
A volumetric scene representation consists of a finite set of grid cells, or voxels.
For each voxel, an occupancy probability that gives the probability of the voxel containing an obstacle is maintained.
In NBV planning, the set of visible voxels from each candidate view is first estimated by applying raytracing.
A score for the candidate view is calculated as the sum of per-voxel scores for each visible voxel.
The NBV with the greatest score is selected.

NBV planning methods differ in how the score of a view is calculated.
Counting measures~\cite{Connolly1985,Banta2000} count the total number of unknown visible voxels.
Probabilistic measures employ quantities such as entropy of voxel occupancy, or the visibility probability of voxels.
For example, \cite{Kriegel2015} selects the view that has the greatest average occupancy entropy in the visible voxels.
In~\cite{delmerico2018comparison}, scores that weight per-voxel scores by the probability of the voxel being visible are proposed.
Views that observe the most boundary voxels between known and unknown space are preferred in~\cite{Vasquez2014}.
In~\cite{Vasquez-Gomez2017}, the method is extended to consider the uncertainty in sensor motion.
Recently,~\cite{Hepp2018} proposes to learn scoring of candidate views by using a 3D convolutional neural network with a multi-scale volumetric map representation.
The assumption that voxel occupancies are independent is relaxed in~\cite{Hou2019} by applying Markov chain Monte Carlo.

Instead of a voxel grid, \cite{Border2018} proposes an implicit surface density representation.
The representation consists of points observed on a surface.
Regions with a high density of points are classified as core, and other regions as outliers.
The expected observed volume between core and outlier regions is maximized.
A Gaussian process (GP) implicit surface representation is used in~\cite{Hollinger2013}.
The variance of the GP quantifies the uncertainty of the surface reconstruction.
A planning algorithm uses the variance to find trajectories that reduce uncertainty the~most.

The related problem of planning how a robot should manipulate an object within the view of a stationary camera is investigated in~\cite{Krainin2011}.
The object model is a signed distance function on a voxel grid.
In~\cite{Bircher2016}, trajectories for environment exploration are scored by estimating the unknown volume visible along the trajectory.
A robotic system for structural inspection is demonstrated in~\cite{Quenzel2019}.
As a known environment is considered, an inspection path is planned offline to guarantee a desired amount of overlap between captured images. 

Some recent works consider NBV planning in the multi-robot setting.
Sukkar et~al.~\cite{Sukkar2019} reconstruct regions of interest (ROIs) by controlling viewpoints of multiple robots equipped with cameras.
ROIs correspond to fruit in an agricultural application and are detected by color thresholding.
Candidate viewpoints are scored based on the expected information gain on the detected ROIs.
The algorithm proposed in~\cite{Best2018} is applied to decentralize the planning task so that each robot can plan its views without needing to constantly communicate with the other robots.
Cui et~al.~\cite{Cui2019} select if a robot should apply a laser range finder or depth camera next.
Candidate sensing actions are scored by a weighted sum of the number of unknown voxels, occupied voxels, and voxels neighbouring a free voxel.
Unlike~\cite{Sukkar2019}, we target tasks where the robots communicate constantly to coordinate views.
We do not focus on ROIs, but strive for a complete reconstruction.
Different to~\cite{Cui2019}, we choose the views of multiple sensors simultaneously.
By applying submodular maximization, we provide a performance guarantee for our planning algorithm.

\subsection{Matroid constraints and submodularity in planning}
Multi-robot coordination may be viewed as an item selection task.
Each robot selects an item (e.g., a trajectory), with the objective of maximizing a performance measure that is a function of the selected set of items.
A matroid defines a system of independent sets, which models constraints such that one robot may select at most one trajectory.
A submodular function defined on independent sets has a diminishing returns property that states that adding a new item to an existing set of items is less useful the larger the existing set is.
Maximizing a submodular function under a matroid constraint by a polynomial-time greedy algorithm provides a constant-factor approximation~\cite{fisher1978analysis}.
Furthermore, many information-theoretic functions are submodular, making greedy submodular maximization a popular approach for tasks such as planning sensor placements~\cite{Krause2008}.

Multi-robot information gathering tasks such as exploration~\cite{Corah2019} and informative path planning~\cite{Singh2009} have been addressed as submodular maximization under a matroid constraint.
A sequential greedy allocation (SGA) algorithm is proposed in~\cite{Singh2009}, allowing extension of single-robot planning to any number of robots while maintaining performance guarantees.
A distributed variant of SGA proposed in~\cite{Corah2019} scales up to larger problems due to distribution of the computation, while maintaining approximation guarantees.
In~\cite{Liu2019}, a greedy algorithm is proposed for coupled problems, such as selecting the composition of a multi-robot team and subsequently planning how the robots should act.
The work most similar to ours is~\cite{Corah2019}, where the distributed approach incurs an additional suboptimality penalty.
In our paper we target the centralized setting where greedy submodular maximization enjoys a tighter suboptimality bound.

\section{Preliminaries} 
\label{sec:preliminaries}
We define now mathematical concepts that are later used.

\subsection{Partition matroids}
Matroids generalize the concept of independence from linear algebra to sets.
For an introduction to matroids, we refer the reader to~\cite{Oxley2003}.
For the purposes of this paper, we only require the concept of a partition matroid.

Let $A_i$ be $n$ pairwise disjoint sets, and let $a_i$ be integers s.t.~$0 \leq a_i \leq |A_i|$.
Denote by $\Omega$ the union of all $A_i$.
Then $(\Omega, \mathcal{I})$ is a partition matroid if $\mathcal{I} = \left\lbrace I \subseteq \Omega \mid \forall i: |I\cap A_i| \leq a_i \right\rbrace$.
The elements of $\mathcal{I}$ are \emph{independent sets}.
The sets $A_i$ are \emph{blocks} of the partition matroid.
We assume $a_i = 1$ for all partition matroids, such that there is at most one element per block in an independent set.

\subsection{Submodularity}
Submodularity formalizes the notion of diminishing returns: the marginal utility of adding a new item to an existing set of items is smaller the larger the existing set is.

A set function $f:2^\Omega \to \mathbb{R}$ is submodular if for any $A \subseteq B\subseteq \Omega$, and any $x \in \Omega\setminus B$, $f(A\cup \{x\})-f(A) \geq f(B\cup \{x\}) - f(B)$.
Additionally, $f$ is monotonically increasing, if for any $A\subseteq \Omega$ and $x\in\Omega\setminus A$, $f(A \cup \{x\}) \geq f(A)$.

\section{Multi-sensor next-best-view planning} 
\label{sec:multi_sensor_next_best_view_planning}
Consider a team of $n$ sensors, and let $X_1, \ldots, X_n$ be $n$ pairwise disjoint sets with $X_i$ representing the possible views for sensor $i$. 
Denote by $\mathcal{I}$ the collection of independent sets in a partition matroid with blocks $X_i$.
The problem of multi-sensor NBV planning is to select an independent set of views -- one for each sensor -- that maximizes the utility of the views.
Utility is measured by a function $f$ mapping independent sets to a real number.
Formally, the problem is
\begin{equation}
\label{eq:problem}
  \max\limits_{I \in \mathcal{I}} f(I).
\end{equation}
Next, we introduce our environment and sensor models.
We then discuss how utility is measured in single-sensor NBV planning, and motivate the need for a utility function $f$ specifically designed for the multi-sensor setting.

\subsection{Environment and sensor models} 
\label{sub:environment_and_sensor_models}
The environment is represented as a volumetric grid $V$ with a finite number of grid cells.
For each grid cell, $P(v)$ gives the probability that the grid cell is occupied by an obstacle.
The cell occupancies are independent of each other.

\begin{figure}
  \centering
  \includegraphics[width=\columnwidth]{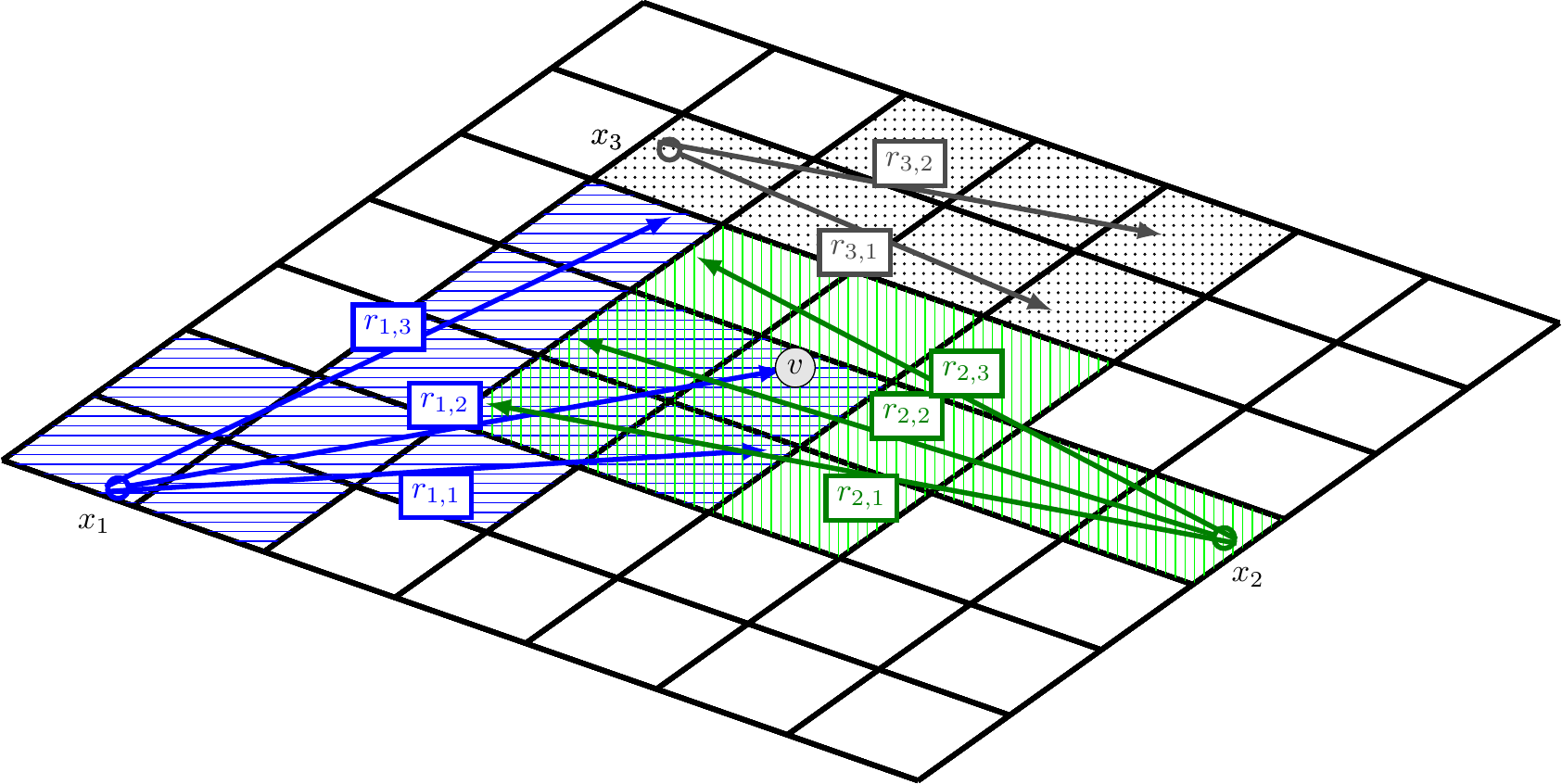}
  \caption{Grid cells are shown by the black squares.
  Views are indicated by $x_i$. The rays $r_{i,j}$ emitted from $x_i$ are indicated by colored arrows. The colored shading on a grid cell indicates that the grid cell is traversed through by a ray with the corresponding color.}
  \label{fig:rays}
\end{figure}

Each of the sensors is a camera that outputs a depth image.
The view $x_i$ of each sensor is its translation and orientation w.r.t.~a fixed coordinate system.
A depth image is considered as a set of rays emitted from $x_i$.
A ray is emitted in the direction of each pixel in the depth image, terminating after traversing a distance equal to the measured depth.
We write each ray $r$ as a sequence of grid cells it traverses through, e.g., $r=(v_1, v_2, \ldots, v_k)$, where $v_i \in V$ and $k$ is the total number of grid cells traversed.
At view $x_i$, the set of all rays emitted by the sensor is $R(x_i)$.

Fig.~\ref{fig:rays} illustrates a volumetric grid.
For clarity the drawing is in two dimensions, but in this paper we deal with three dimensional grids.
Three views and their corresponding sets of emitted rays are shown.
For example, the set of rays emitted from $x_1$ is $R(x_1) = \{r_{1,1}, r_{1,2}, r_{1,3}\}$.
Each ray $r_{i,j}$ is also a sequence of grid cells. 
For example, $r_{1,2}$ is the sequence of all grid cells traversed by the ray starting at the cell where $x_1$ is located and terminating at grid cell $v$ labeled by the shaded circle marker near the center of the grid.

Given a real depth image recorded by a sensor, the occupancy probabilities are updated using the inverse sensor model of~\cite{Hornung2013} as follows.
Raytracing is applied to determine which grid cells are updated.
The focal lengths $(f_x, f_y)$ and the principal point $(c_x,c_y)$ of the depth camera are required.
Let $d$ be the measured depth at pixel coordinates $(x,y)$.
The ray direction vector $\begin{bmatrix}\nicefrac{(x-c_x)}{f_x}& \nicefrac{(y-c_y)}{f_y}& 1\end{bmatrix}^T$ is transformed to the coordinate system of the volumetric grid, and a ray of length $d$ is emitted in the resulting direction.
The grid cells traversed by the ray are recorded.
The grid cell where a ray terminates is observed as a hit, and other grid cells traversed by the ray are observed as misses.
The log-odds representation for $P(v)$ is $L(v) = \log \frac{P(v)}{1-P(v)}$.
Given an observed grid cell $v$ and its observation $z$, the log-odds are updated to $L(v\mid z) = L(v) + l(z)$, where
\begin{equation}
  l(z) = \begin{cases} \log \frac{p_h}{1-p_h} & \text{if } z = \text{hit}\\
  \log \frac{p_m}{1-p_m} & \text{if } z = \text{miss}
  \end{cases}
\end{equation}
and $p_h$ and $p_m$ are the hit and miss probabilities, respectively.
The posterior $P(v\mid z)$ is obtained by inverting $L(v\mid z)$.


\subsection{Score function for an emitted ray} 
\label{sub:score_function_for_emitted_rays}
The ray score function calculates the score of a single ray as a sum over the grid cells traversed by the ray.
At each grid cell $v$, a weight term dependent on which other cells the ray traversed to reach $v$ is multiplied by the information gain available at $v$.
We formalize this by the following definition, and give concrete examples afterwards.
\begin{definition}[Ray score function]
  Let $r$ be a ray that traverses the sequence $(v_1, v_2, \ldots, v_k)$ of grid cells.
  The ray score function $s:V^k \to \mathbb{R}$ is defined as
  \begin{equation}
    s(r) = \sum\limits_{j=1}^k w_{j,r}(v_1, \ldots, v_{j-1}) c(v_j),
  \end{equation}
  where $w_{j,r}:V^{j-1}\to\mathbb{R}$ is a weight term, and $c:V\to\mathbb{R}$ is the information gain available at a grid cell.
  For $j=1$, we define $w_{1,r} \equiv 1$.
\end{definition}

In our experiments we use the widely applied entropy score~\cite{Kriegel2015,delmerico2018comparison} by setting the information gain equal to entropy of voxel occupancy, $c(v) = -P(v)\log_2 P(v) -(1-P(v))\log_2(1-P(v))$, and defining $w_{j,r}$ always equal to one.
This choice encourages exploration by assigning a high score for views that observe voxels with a high uncertainty.
However, we derive all our theoretical results for the general form given in the definition above, which includes many scores proposed in earlier literature.
Counting measures~\cite{Connolly1985,Banta2000} are obtained by setting the weight term always equal to 1, and the information gain function to return 1 when the grid cell is unknown and 0 otherwise.
Occlusion-aware scores~\cite{delmerico2018comparison} are obtained by setting the weight term equal to $\prod\limits_{i=1}^{j-1}(1-P(v_i))$, the probability that all grid cells traversed are free, i.e., the probability that the ray reaches $v_{j-1}$ before hitting an obstacle and terminating.
A region of interest $S \subset V$ is focused by setting the weight term to depend on the indicator function~of~$S$.

Single-sensor NBV approaches select a view $x$ by maximizing $\sum\limits_{r\in R(x)} s(r)$.
A naive extension of the single-sensor NBV approach to a multi-sensor setting selects for each sensor $i$ the view $x_i$ that maximizes $\sum\limits_{r\in R(x_i)} s(r)$.
However, since there is no incentive for coordination between the sensors, unnecessarily overlapping views may be selected.


\section{An overlap-aware utility function for multi-sensor next-best-view planning} 
\label{sec:overlap_aware_utility_function_for_sets_emitted_rays}
To coordinate view selection for multiple sensors, we propose an overlap-aware utility function to score sets of rays.
For each grid cell, the utility function only takes into account the ray along which the weighted information gain is maximized.
This captures the intuitive notion that overlap between multiple sensors should be avoided if possible, and only the most useful ray traversing through a particular grid cell should be considered in NBV planning.
Our proposed utility function helps reduce unnecessary overlap in the selected views, as we later show experimentally.
We prove our utility function is monotonically increasing and submodular, and show how single-sensor NBV planning arises as a special case when sensor views do not overlap.

\subsection{Overlap-aware utility} 
\label{sub:overlap_aware_utility}
Recall that $X_i$ are the pairwise disjoint sets of available views for each sensor.
Let $\Omega$ denote the set of all possible views obtained as the union of all $X_i$.
The partition matroid of valid views is $(\Omega, \mathcal{I})$ with $\mathcal{I} = \{I \subseteq \Omega \mid \forall i: |I \cap X_i| \leq 1 \}$.
Any independent set $I\in\mathcal{I}$ contains at most one view from each $X_i$.
The union of all rays emitted from views in an independent set is $R(I) = \cup_{x_i \in I}R(x_i)$.
For example in Fig.~\ref{fig:rays}, $R(\{x_1, x_2\}) = \{r_{1,1},r_{1,2},r_{1,3},r_{2,1},r_{2,2},r_{2,3}\}$.

The overlap-aware utility function considers for each grid cell all the rays that traverse through the cell.
For any grid cell $v$, we denote by $T_I(v)$ the subset of rays in $R(I)$ that traverse through $v$.
Recalling that any ray $r \in R(I)$ is a sequence of grid cells, we define
\begin{equation}
\label{eq:traverse_through_v}
  T_I(v) = \left\lbrace r \in R(I) \middle| r \cap \{v\} \neq \emptyset \right\rbrace.
\end{equation}
For instance, letting $R(\{x_1, x_2\})$ be as in the paragraph above, and considering the grid cell $v$ indicated in Fig.~\ref{fig:rays},  $T_{\{x_1, x_2\}}(v) = \{r_{1,2}, r_{2,2}\}$, containing exactly the two rays that traverse through $v$.

We propose the following overlap-aware utility function.
\begin{definition}[Overlap-aware utility function]
\label{def:overlap_aware}
  Let $(\Omega, \mathcal{I})$ be a partition matroid of valid sensor views.
  For any independent set $I \in \mathcal{I}$ of views, the overlap-aware utility is
  \begin{equation}
  \label{eq:definition_overlap_aware}
  f(I) = \sum\limits_{v\in V} \max\limits_{r \in T_I(v)} \left[ w_{j,r}(v_1, \ldots, v_{j_r - 1})c(v) \right],  
  \end{equation}
  where $w_{j,r}(v_1, \ldots, v_{j_r - 1})$ is the weight term, and $c$ is the information gain available at $v$.
  For $I=\emptyset$, set $f(\emptyset)=0$.
\end{definition}
At any grid cell, only the contribution from the ray that has the greatest weight term when reaching that grid cell is considered for overlap-aware utility.
For instance, if the weight term is chosen as the probability that the ray reaches the grid cell, only the ray with the greatest probability contributes to the utility.
Selecting overlapping views that observe the same grid cells is implicitly discouraged by the utility function.
Emitting an additional ray that reaches a grid cell only improves the utility if the additional ray has a greater weight term.
For example, consider the grid cell $v$ indicated by the gray circle in Fig.~\ref{fig:rays}.
The sum term in Eq.~\eqref{eq:definition_overlap_aware} that corresponds to $v$ is the maximum over the two rays $r_{1,2}$ and $r_{2,2}$ that traverse through~$v$.


\subsection{Proof of submodularity and monotonicity} 
\label{sub:overlap_aware_utility_is_submodular}
We prove a supporting lemma and then the main result.
\begin{lemma}
\label{lemma:submodularity}
For any $v \in V$, the set function
  \begin{equation}
    \label{eq:per_voxel_score}
    g_v(I) = \max\limits_{r\in T_I(v)} w_{j,r}(v_1, \ldots, v_{j_r-1})c(v)
  \end{equation}
  is monotonically increasing and submodular.
\end{lemma}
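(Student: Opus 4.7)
My plan is to isolate how $g_v$ depends on $I$. For each ray $r$ passing through $v$, the quantity $w_{j,r}(v_1,\ldots,v_{j_r-1})c(v)$ is a fixed number determined only by the ray and the current occupancy grid, not by $I$. Calling this value $\alpha(r)$, I would rewrite $g_v(I) = \max\{\alpha(r) : r \in R(I),\ v \in r\}$, with the empty maximum defined as $0$ to remain consistent with $f(\emptyset)=0$. Since $R(I) = \bigcup_{x_i \in I} R(x_i)$ is obviously monotone in $I$, the family over which the maximum is taken can only grow as $I$ grows; under the standard assumption that the weight term and information gain are nonnegative (true for the entropy, counting, occlusion-aware, and region-of-interest variants listed in Section~\ref{sub:score_function_for_emitted_rays}), monotonicity of $g_v$ then follows immediately.

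For submodularity, I would fix $A \subseteq B$ and a view $x \notin B$, and let $M_x$ denote the largest $\alpha(r)$ over rays $r \in R(x)$ that pass through $v$ (taking $M_x = 0$ if no such ray exists). Because adding $x$ to a set $I$ simply augments $T_I(v)$ with those rays of $R(x)$ that traverse $v$, I obtain $g_v(I\cup\{x\}) = \max(g_v(I), M_x)$, so the marginal gain is $\Delta_x(I) = \max(0,\, M_x - g_v(I))$. From the monotonicity step, $g_v(A) \leq g_v(B)$, hence $M_x - g_v(A) \geq M_x - g_v(B)$; taking the pointwise maximum with $0$ preserves the inequality, giving $\Delta_x(A) \geq \Delta_x(B)$, which is exactly the diminishing-returns property.

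The main subtle points I expect to confront are (i) explicitly handling the case in which $T_I(v)$ is empty so that the maximum is well-defined and monotonicity does not create a spurious jump, and (ii) confirming that the per-ray value $\alpha(r)$ genuinely depends only on $r$ and not on $I$ — in particular, $w_{j,r}(v_1,\ldots,v_{j_r-1})$ is a function of the cells along $r$ itself, not of other rays in $R(I)$ — so that inserting a new view cannot change the $\alpha$-values attached to pre-existing rays. Once these bookkeeping details are settled, both claims reduce to the standard fact that the maximum of a growing subfamily of a fixed nonnegative real-valued family is a monotone submodular function of the indexing set, which is the engine of the entire argument.
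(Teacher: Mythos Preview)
Your proposal is correct and follows essentially the same route as the paper's proof: both arguments reduce to the observation that $T_I(v)$ grows monotonically with $I$, write $g_v(I\cup\{x\}) = \max\{g_v(I),\, g_v(\{x\})\}$ (your $M_x$ is exactly $g_v(\{x\})$), and conclude via $\Delta_x(I)=\max\{0,\,g_v(\{x\})-g_v(I)\}$ together with monotonicity. You are slightly more careful than the paper in making explicit the nonnegativity assumption and the empty-$T_I(v)$ convention, but these are bookkeeping refinements rather than a different approach.
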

\begin{proof}
  Fix $v\in V$ and $A \subseteq B \subseteq \Omega$.
  Thus, $T_A(v) \subseteq T_B(v)$, since adding more views can only increase the number of rays that traverse through $v$.
  The maximum cannot decrease, i.e., $g_v(A) \leq g_v(B)$, so $g_v$ is monotonically increasing.

  To prove submodularity, fix $v\in V$ and $A \subseteq B \subseteq \Omega$.
  Select an arbitrary view $x \in \Omega \setminus B$, which is not in $B$ and thus also not in $A$.
  By the definition in Eq.~\eqref{eq:traverse_through_v}, every element in $T_{\{x\}}(v)$ is in both $T_{A\cup\{x\}}(v)$ and $T_{B\cup\{x\}}(v)$.
  Thus,
  \[
  T_{A\cup\{x\}}(v) \!=\! T_A(v) \cup T_{\{x\}}(v) \!\subseteq \!T_B(v) \cup T_{\{x\}}(v) \!=\! T_{B\cup\{x\}}(v),
  \]
  which implies $g_v(A\cup\{x\}) = \max \left\lbrace g_v(A), g_v(\{x\}) \right\rbrace$ and $g_v(B\cup\{x\}) = \max \left\lbrace g_v(B), g_v(\{x\}) \right\rbrace$.
  Now,
  \begin{align*}
    &g_v(A\cup\{x\}) - g_v(A) = \max \left\lbrace g_v(A), g_v(\{x\}) \right\rbrace - g_v(A)\\
    &=\max \!\left\lbrace 0, g_v(\{x\})-g_v(A) \right\rbrace \geq \max \!\left\lbrace 0, g_v(\{x\})-g_v(B) \right\rbrace\\
    &=\max \!\left\lbrace g_v(B), g_v(\{x\}) \right\rbrace - g_v(B)=g_v(B\cup\{x\}) - g_v(B),
  \end{align*}
  where the inequality follows as $g_v$ is monotonically increasing and the proof for submodularity is complete.
\end{proof}
\begin{theorem}
  The overlap-aware utility function $f$ as defined in Eq.~\eqref{eq:definition_overlap_aware} is submodular and monotonically increasing.
\end{theorem}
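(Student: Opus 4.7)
The plan is to observe that the overlap-aware utility function $f$ decomposes as a finite sum over grid cells of the per-cell functions $g_v$ from Lemma~\ref{lemma:submodularity}, and then to invoke the standard fact that both monotonicity and submodularity are preserved under finite sums. Specifically, for any $I \in \mathcal{I}$, we can write $f(I) = \sum_{v \in V} g_v(I)$, adopting the convention that $g_v(\emptyset) = 0$ so the formula is consistent with $f(\emptyset) = 0$ stated in Definition~\ref{def:overlap_aware}. Since $V$ is finite, this is a well-defined finite sum.

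I would then verify the two properties cell by cell and sum the resulting inequalities. For monotonicity, pick $A \subseteq \Omega$ and $x \in \Omega \setminus A$. By Lemma~\ref{lemma:submodularity}, $g_v(A \cup \{x\}) \geq g_v(A)$ holds for every $v \in V$, and summing over $v$ gives $f(A \cup \{x\}) \geq f(A)$. For submodularity, pick $A \subseteq B \subseteq \Omega$ and $x \in \Omega \setminus B$. Lemma~\ref{lemma:submodularity} yields $g_v(A \cup \{x\}) - g_v(A) \geq g_v(B \cup \{x\}) - g_v(B)$ for every $v$, and summing these inequalities over $v \in V$ produces the submodularity inequality for $f$.

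The only subtlety to check is that the decomposition $f(I) = \sum_v g_v(I)$ truly agrees with Definition~\ref{def:overlap_aware} for all $I \in \mathcal{I}$, including the empty set. For nonempty $I$, this is immediate from the definitions of $f$ and $g_v$. For $I = \emptyset$, $T_\emptyset(v) = \emptyset$ for every $v$, so one must adopt the convention $\max \emptyset = 0$ (equivalently, $g_v(\emptyset) = 0$), which matches the stipulation $f(\emptyset) = 0$. Once this convention is in place, the argument in the previous paragraph is also valid when $A = \emptyset$.

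There is no real obstacle here: the main content of the theorem has already been accomplished in Lemma~\ref{lemma:submodularity}, and the theorem is essentially a packaging step that lifts the per-cell result to the global utility via linearity of summation. The only pitfall is the bookkeeping around the empty-set case, which the plan above resolves by choosing the natural convention for the maximum over an empty set.
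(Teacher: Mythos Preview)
Your proposal is correct and follows exactly the paper's own argument: the proof in the paper is the single sentence that $f$ is a sum over $v\in V$ of the monotone submodular terms $g_v$ from Lemma~\ref{lemma:submodularity}, and such sums preserve both properties. Your write-up is simply a more detailed unpacking of that sentence, including the harmless empty-set bookkeeping.
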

\begin{proof}
A sum of monotonically increasing submodular terms (Lemma~\ref{lemma:submodularity}) is monotonically increasing and submodular.
\end{proof}
Recall that a partition matroid with blocks $X_i$ describes the valid combinations of views the robots may choose.
We proved that the overlap-aware utility function $f$, Definition~\ref{def:overlap_aware}, is a submodular function of the independent sets of this partition matroid.
In summary, multi-sensor NBV planning, Eq.~\eqref{eq:problem} is matroid-constrained submodular maximization.



\subsection{Single-sensor NBV planning is a special case} 
\label{sub:analysis}
We prove that if there is no potential overlap between the views of any sensors, multi-sensor NBV planning reduces to solving $n$ independent single-sensor NBV planning problems.
Two views are \emph{disjoint} if the rays emitted from the views do not traverse through any of the same grid cells.
\begin{definition}
  Two views $x_i, x_j$ are disjoint if for every grid cell $v\in V$, the rays emitted from $x_i$ and $x_j$ do not overlap, that is, $\forall v: T_{\{x_i\}}(v) \cap T_{\{x_j\}}(v) = \emptyset$.
\end{definition}
In Fig.~\ref{fig:rays}, the views $x_1$, $x_3$ are disjoint, and the views $x_2$, $x_3$ are disjoint, but the views $x_1$, $x_2$ are not disjoint.

The following proposition shows that if there is no overlap between the views of any two different sensors, the multi-sensor NBV problem with the overlap-aware utility function is equivalent to $n$ single-sensor NBV planning problems.
\begin{proposition}
\label{thm:disjoint}
  Let $X_i$ be pairwise disjoint sets of views for each sensor $i$.
  If for every $i \neq j$, all $x_i \in X_i$ and $x_j \in X_j$ are disjoint, then Eq.~\eqref{eq:problem} with the overlap-aware utility function from Eq.~\eqref{eq:definition_overlap_aware} is equivalent to $\sum\limits_{i=1}^n \max\limits_{x_i \in X_i} f(\{x_i\}).$
\end{proposition}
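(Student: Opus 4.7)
The plan is to show that under the disjointness assumption, the overlap-aware utility decomposes additively over the views in any independent set, after which the maximization over a partition matroid factorizes block by block.

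First I would fix an arbitrary independent set $I \in \mathcal{I}$ and observe that by the matroid definition $I$ contains at most one view from each block $X_i$, so I can write $I = \{x_{i_1}, \ldots, x_{i_k}\}$ with the $i_j$ distinct. By the definition $T_I(v) = \bigcup_{x \in I} T_{\{x\}}(v)$. The key step is to use the pairwise-disjointness hypothesis: for any $v \in V$ and any two distinct $x, x' \in I$ (which necessarily belong to different blocks, since $I$ has at most one view per block), $T_{\{x\}}(v) \cap T_{\{x'\}}(v) = \emptyset$. In particular, at each $v$ the collection of subsets $\{T_{\{x\}}(v)\}_{x \in I}$ consists of pairwise disjoint sets, and at most one of them is nonempty for that $v$.

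Next, I would use this structural fact to split the sum defining $f$. For each $v$, either $T_I(v) = \emptyset$ (in which case the corresponding summand is $0$ by convention, matching $f(\emptyset)=0$), or there is a unique $x \in I$ with $T_I(v) = T_{\{x\}}(v)$; in the latter case the summand equals $\max_{r \in T_{\{x\}}(v)} w_{j,r}(v_1,\ldots,v_{j_r-1})\,c(v)$, which is precisely the $v$-th summand of $f(\{x\})$. Summing over $v$ and regrouping terms by which $x \in I$ contributes, this yields
\begin{equation*}
  f(I) \;=\; \sum_{x \in I} f(\{x\}).
\end{equation*}
This is the main content of the proof; every other step is bookkeeping.

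Finally, I would maximize both sides over $I \in \mathcal{I}$. Because $f$ is monotonically increasing (already established in the theorem above) and the matroid constraint allows up to one element from each $X_i$, an optimal $I$ will contain exactly one view per block. The right-hand side above is then a sum of $n$ terms, each depending only on the view chosen from a single block, so maximizing it decouples:
\begin{equation*}
  \max_{I \in \mathcal{I}} f(I) \;=\; \max_{I \in \mathcal{I}} \sum_{x \in I} f(\{x\}) \;=\; \sum_{i=1}^n \max_{x_i \in X_i} f(\{x_i\}),
\end{equation*}
which is the claim. The only subtle point, and the likely main obstacle, is making the decomposition $f(I) = \sum_{x \in I} f(\{x\})$ rigorous given that the $\max$ inside $f$ is taken over a possibly empty set; this is handled cleanly by the convention $f(\emptyset)=0$ built into Definition~\ref{def:overlap_aware} together with the disjointness property ruling out the case where two different views simultaneously contribute to the same cell's maximum.
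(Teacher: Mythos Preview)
Your proposal is correct and follows essentially the same approach as the paper: both arguments use the disjointness hypothesis to conclude that for each grid cell $v$ at most one view in $I$ contributes rays through $v$, so $f(I)=\sum_{x\in I} f(\{x\})$, after which the partition-matroid maximization factorizes blockwise. Your write-up is in fact more careful than the paper's, since you explicitly handle the empty-$T_I(v)$ case and justify via monotonicity that an optimal $I$ picks exactly one view per block, whereas the paper simply fixes $I=\{x_1,\ldots,x_n\}$ and rearranges.
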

\begin{proof}
Let $I = \{x_1, \ldots, x_n\}$.
As all views are disjoint,
for any $v \in V$, there exists exactly one $x_i \in I$ such that $T_I(v) = T_{\{x_i\}}(v)$.
The claim is proven by rearranging Eq.~\eqref{eq:definition_overlap_aware}: 
$\sum\limits_{i=1}^{n} \sum\limits_{v\in V} \max\limits_{r\in T_{\{x_i\}}(v)} \left[ w_{j,r}(v_1, \ldots, v_{j_r-1})c(v)\right] = \sum\limits_{i=1}^{n} f(\{x_i\}).$
\end{proof}


\section{A greedy algorithm for multi-sensor NBV planning} 
\label{sec:greedy_maximization_for_multi_sensor_nbv_planning}
Algorithm~\ref{alg:greedy} is a greedy strategy for solving Eq.~\eqref{eq:problem}.
The algorithm starts from the initial solution $I_0 = \emptyset$ at iteration $k=0$.
As long as there exists a view $x$ in the ground set $\Omega$ such that $I_k \cup \{x\}$ is an independent set, we repeat the following two steps.
First, find the view $x^*$ with the greatest marginal utility that maintains the matroid constraint, that is,
\begin{equation}
  x^* = \argmax\limits_{x\in \Omega \text{ s.t. } I_{k}\cup \{x\}\in \mathcal{I}} f(I_k \cup \{x\}) - f(I_k).
\end{equation}
Second, update the solution by $I_{k+1} = I_k \cup \{x^*\}$, remove $x^*$ from the ground set, and increment $k$.
When the input is the partition matroid of sensor views, the output $I_n$ contains exactly one view for each of the $n$ sensors.
When $f$ is monotonically increasing and submodular, $I_n$ is a $\frac{1}{2}$-approximation~\cite{fisher1978analysis}, that is, $f(I_n) \geq \frac{1}{2}\max\limits_{I\in\mathcal{I}} f(I)$.

\begin{algorithm}[t]
\caption{Greedy multi-sensor NBV planning}
\label{alg:greedy}
\begin{algorithmic}[1]
\Require{Partition matroid $(\Omega, \mathcal{I})$ of views}
\Ensure{Independent set $I_k\in \mathcal{I}$ containing $k$ planned views}
  \State $k \gets 0$, $I_k \gets \emptyset$
  \While{$\exists x \in \Omega: I_k \cup \{x\} \in \mathcal{I}$}
    \State $x^* \gets \argmax\limits_{x \in \Omega \text{ s.t. } I_{k}\cup \{x\}\in \mathcal{I}} f(I_k \cup \{x\}) - f(I_k)$ \label{line:max}
    \State $I_{k+1} \gets I_{k} \cup \{x^*\}$, $\Omega \gets \Omega \setminus \{x^*\}$, $k \gets k+1$
  \EndWhile
  \State \Return $I_k$
\end{algorithmic}
\end{algorithm}

\begin{figure*}[!ht]
  \centering
  \begin{tabular}{ccccccccc}
  \scriptsize
  \includegraphics[width=1.8cm]{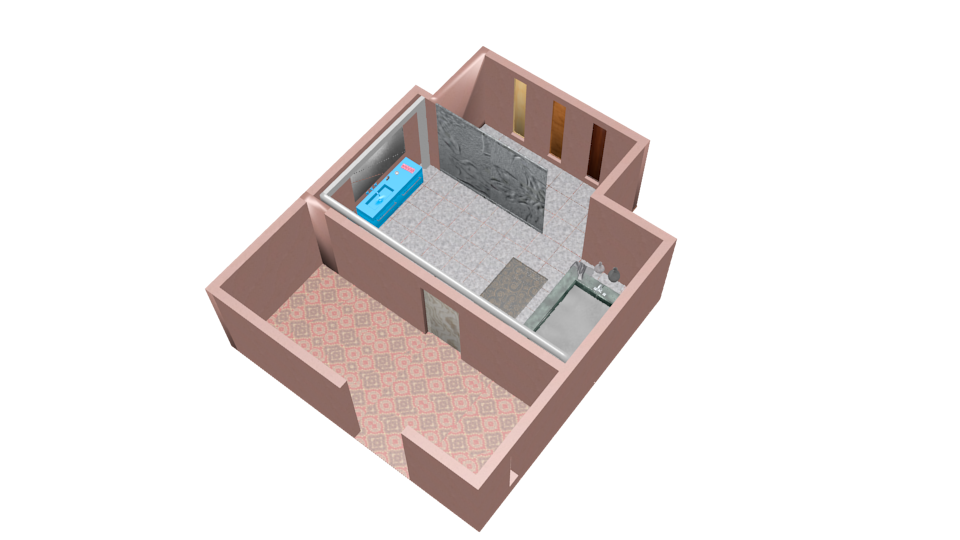} & \includegraphics[width=1.8cm]{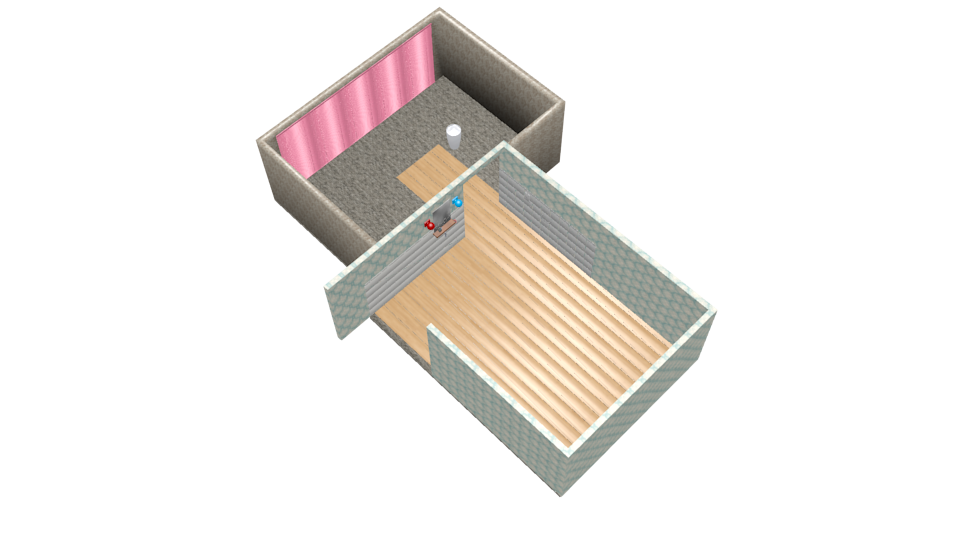} & \includegraphics[width=1.8cm]{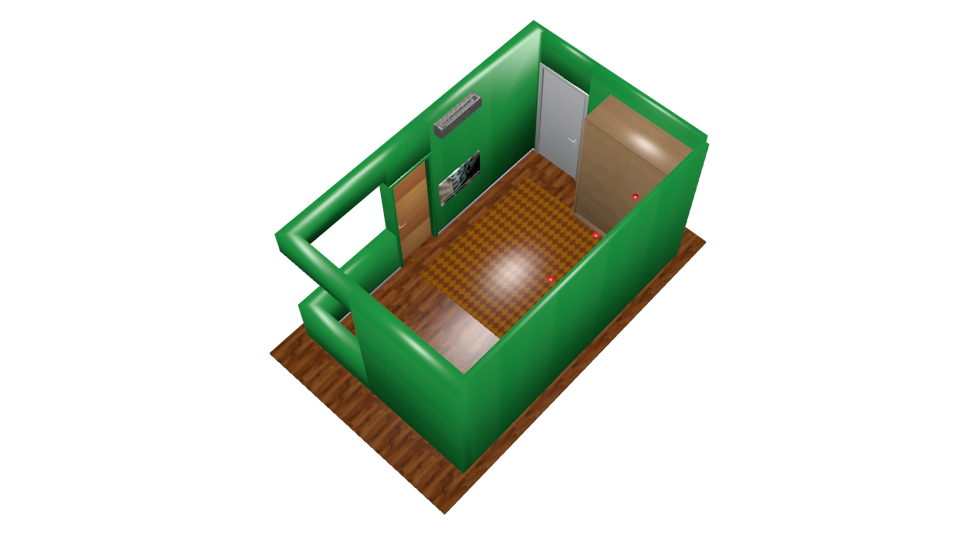} & \includegraphics[width=1.8cm]{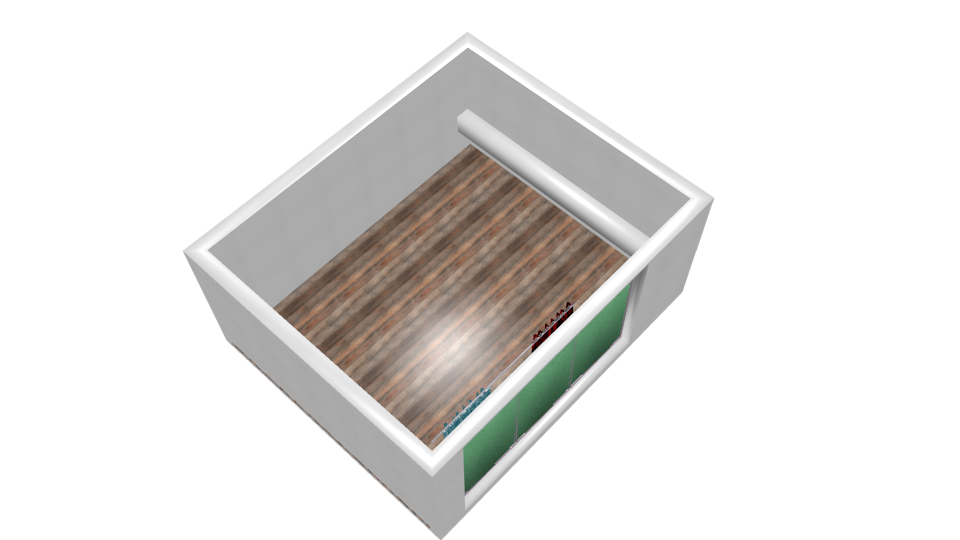} & \includegraphics[width=1.8cm]{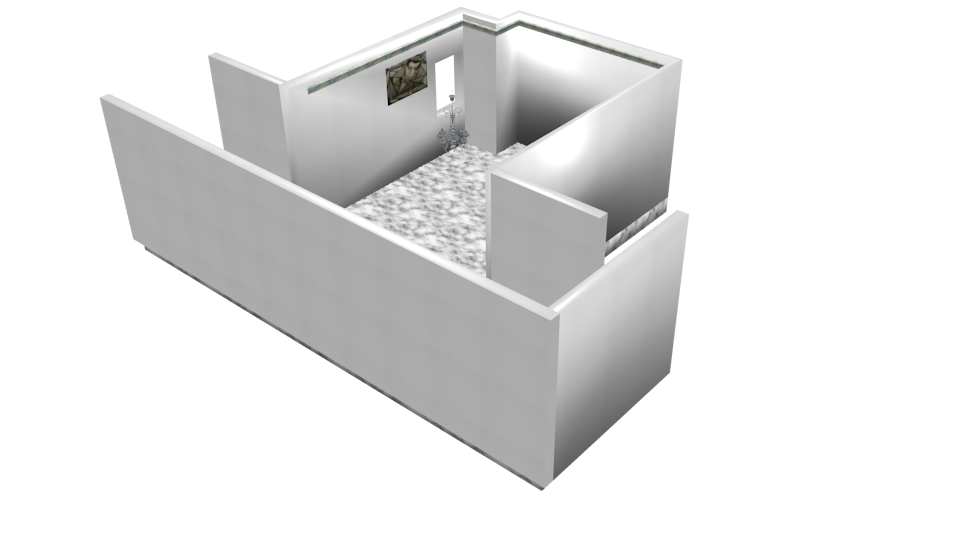} & \includegraphics[width=1.8cm]{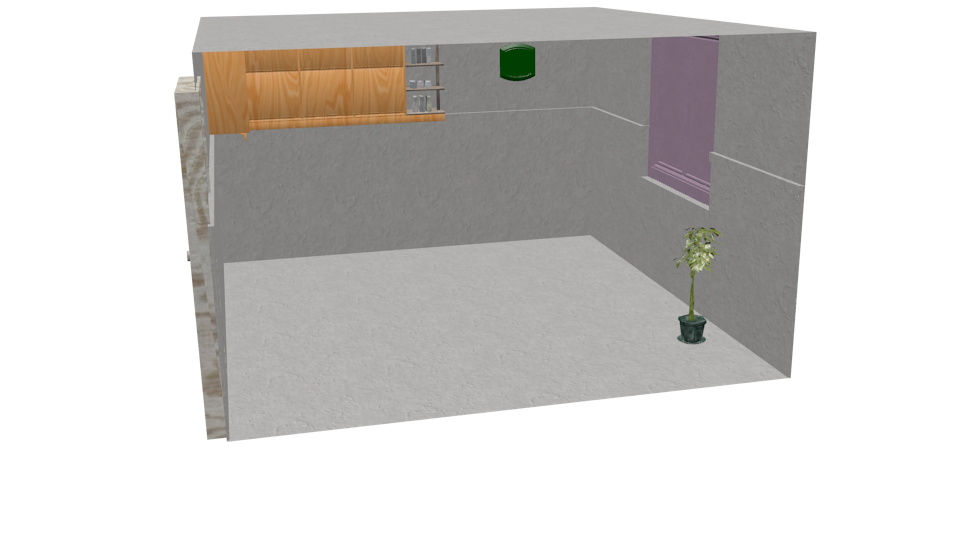} & \includegraphics[width=1.8cm]{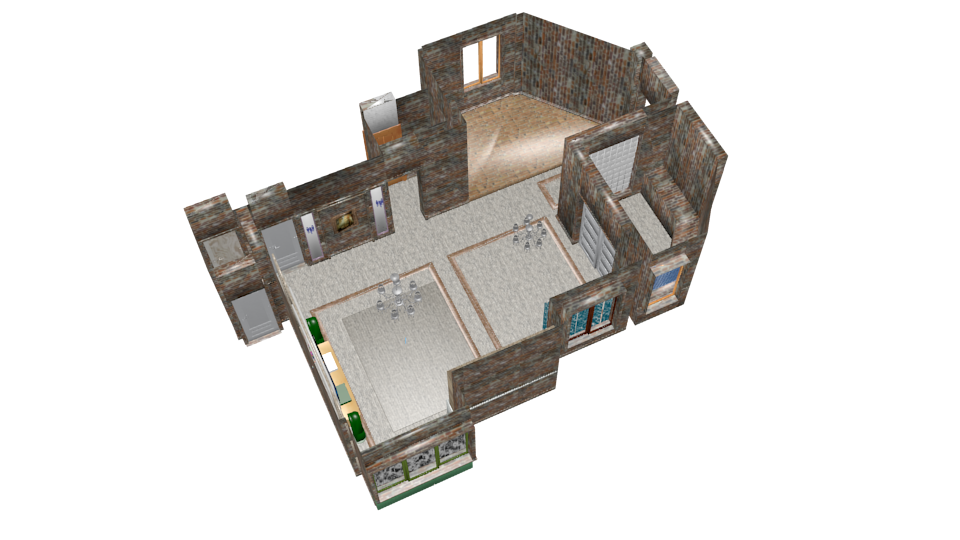} & \includegraphics[width=1.8cm]{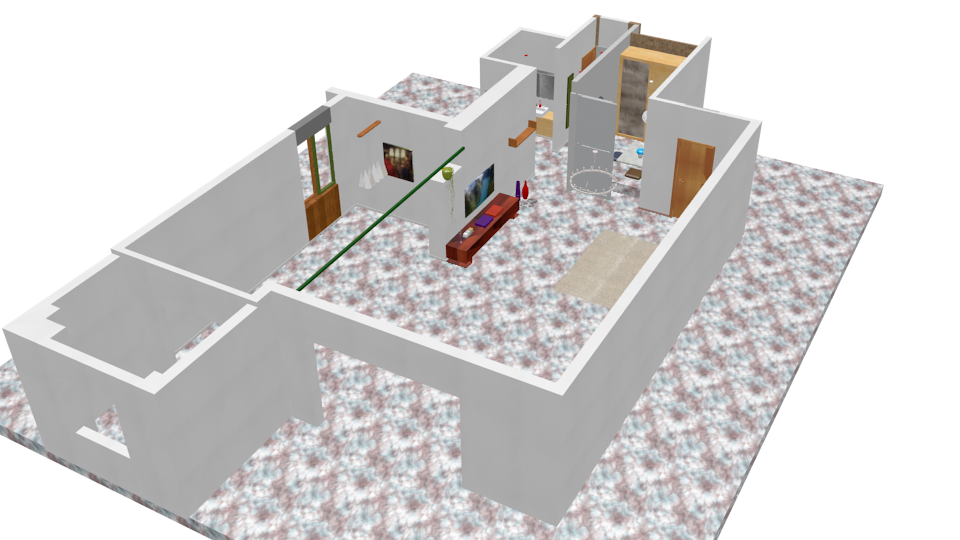} \\ 
  \scriptsize Bathroom 8 & \scriptsize Bathroom 9 & \scriptsize Bedroom 4 & \scriptsize Bedroom 11 & \scriptsize Kitchen 6  & \scriptsize Kitchen 11 & \scriptsize Living room 3 & \scriptsize Living room 5 \\ 
  \end{tabular}
  \caption{The types of layouts in the simulation experiments.}
  \label{fig:environments}
\end{figure*}
\begin{table*}[!ht]
\caption{Average area under curve for surface coverage in the simulation experiments. Best average values bolded.}
\label{tab:auc_surface_coverage_synthetic}
\begin{center}
\begin{tabular}{@{}llccccccccc@{}}
\toprule
                           & Method & Avg & Bathroom\! 8 & Bathroom\! 9 & Bedroom\! 4 & Bedroom\! 11 & Kitchen\! 6 & Kitchen\! 11 & Living\! room\! 3 & Living\! room\! 5 \\ \midrule
\multirow{3}{*}{2 cameras} & Ours   & \textbf{80.1}& 82.3       & 79.0       & 76.1      & 85.1       & 82.6      & 77.4       & 78.7          & 77.0          \\
                           & Single & 75.2         & 77.1       & 73.7       & 72.0      & 80.3       & 77.5      & 72.7       & 73.0          & 71.3          \\
                           & Random & 72.0         & 71.2       & 70.4       & 72.1      & 75.8       & 74.2      & 75.6       & 70.6          & 69.4          \\ \midrule
\multirow{3}{*}{4 cameras} & Ours   & \textbf{89.2}& 90.5       & 88.6       & 86.7      & 92.1       & 90.7      & 87.6       & 87.5          & 86.7          \\
                           & Single & 84.5         & 85.9       & 83.6       & 82.6      & 87.9       & 86.1      & 82.7       & 82.2          & 81.4          \\
                           & Random & 83.9         & 82.2       & 81.9       & 83.8      & 86.0       & 84.7      & 86.3       & 81.8          & 81.1          \\ \midrule
\multirow{3}{*}{8 cameras} & Ours   & \textbf{94.6}& 95.1       & 94.2       & 93.4      & 95.9       & 95.4      & 94.3       & 93.1          & 92.9          \\
                           & Single & 91.9         & 92.5       & 91.3       & 91.0      & 93.6       & 92.8      & 91.4       & 89.8          & 89.4          \\
                           & Random & 91.4         & 90.4       & 90.0       & 91.6      & 92.5       & 91.9      & 93.2       & 89.6          & 89.3          \\ \bottomrule
\end{tabular}
\end{center}
\end{table*}

\begin{figure}[!ht]
  \centering
  \includegraphics[width=0.5\columnwidth]{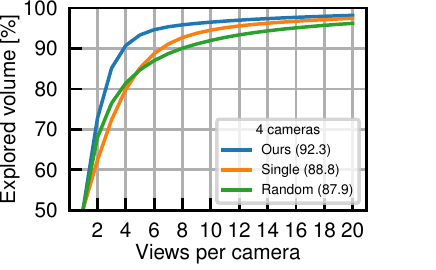}~\includegraphics[width=0.5\columnwidth]{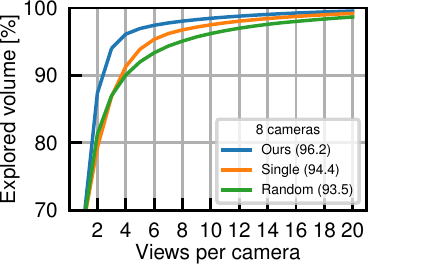}
  \caption{Volume explored as a function of the number of views per camera for 4 cameras (left) and 8 cameras (right) in the simulation experiments. Note the different vertical axis scales.}
  \label{fig:volume_48}
\end{figure}

\begin{figure}[!ht]
  \centering
  \includegraphics[width=0.5\columnwidth]{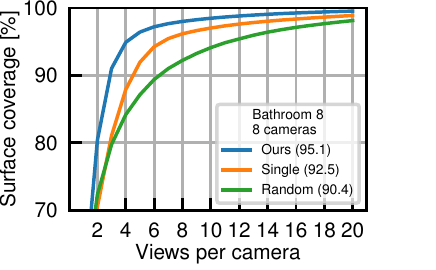}~\includegraphics[width=0.5\columnwidth]{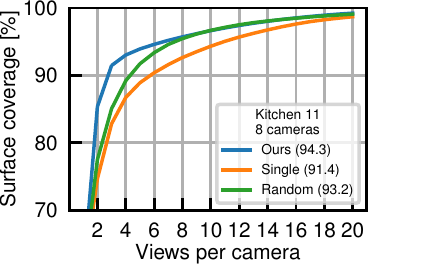}
  \caption{Surface coverage as a function of the number of views per each of 8 cameras in layouts Bathroom~8 (left) and Kitchen~11 (right) in the simulation experiments.}
  \label{fig:surface_comparison}
\end{figure}

We perform raytracing for each view $x\in \Omega$ once and store the best per-voxel scores $g_v(\{x\})$.
We initialize $g_v(I_0)$ to zero for all voxels.
The marginal utility of a view $x$ on Line~\ref{line:max} is then computed as follows.
Initialize the marginal utility to zero.
Loop over voxels visible for $x$ and compare the stored value $g_v(\{x\})$ to the old value $g_v(I_k)$.
If the stored value is greater than the old value, the positive difference is added to the marginal utility.
If the old value is greater than the stored value, continue to the next visible voxel.
After the view $x^*$ with the greatest marginal utility is recovered, the old values for visible voxels are updated to $g_v(I_{k+1}) = \max\{ g_v(I_k), g_v(\{x^*\})\}$.
If raytracing for one view has computational complexity $O(R)$, the overall computational complexity of Algorithm~\ref{alg:greedy} is $O(R|\Omega| + n|\Omega|)$ where the second term accounts for computing the marginal utilities of up to $|\Omega|$ views for up to $n$ sensors.

The proposed greedy submodular maximization algorithm provides an approximation guarantee and runs in polynomial time.
Eq.~\eqref{eq:problem} could also be solved by exhaustive search or a heuristic algorithm.
Exhaustive search recovers an optimal solution but has a worst-case computational complexity of $O(R|\Omega| + |\Omega|^n)$ as the utility of every independent set must be evaluated.
Heuristic algorithms that start with an initial solution and modify it locally are also applicable.
Examples of such algorithms include genetic optimization algorithms, tabu search, or simulated annealing.


\section{Simulation experiments} 
\label{sec:simulation_experiments}
We evaluate our NBV planning algorithm in synthetic environments and in a robotic setup. This section presents the results for the synthetic environments.

\subsection{Experimental setup} 
\label{sub:experimental_setup}
We use the validation set of SceneNet RGB-D~\cite{McCormac2017}, a collection of rendered RGB-D images from randomly generated synthetic environments.
The validation set consists of 1000 scene configurations.
Each scene configuration depicts a synthetic room layout with randomly sampled objects in physically plausible poses.
The room layout is one of 8 possible choices\footnote{We omit the ``Office 14'' layout as it has only one scene configuration.} illustrated in Fig.~\ref{fig:environments}.
Each scene configuration has 300 labeled cameras poses along a continuous trajectory.
320-by-240 pixel depth images from the labeled poses are available to compute a ground truth scene reconstruction.

We consider $n=2$, 4, or 8 sensors, and split the camera poses in each scene configuration into $n$ disjoint subsets $X_i$.
Each subset contains possible views for sensor $i$ and spans the entire trajectory.
We plan a sequence of $20$ views for each sensor by Algorithm~\ref{alg:greedy}.
We compare to independent single-sensor NBV planning of views as described in Subsection~\ref{sub:score_function_for_emitted_rays}, and to selecting the next views uniformly at random.
Each experiment is repeated 10 times.
The true depth images are applied to update occupancy probabilities as described in Subsection~\ref{sub:environment_and_sensor_models}.
We use hit and miss probabilities $p_h=0.9$ and $p_m=0.1$, respectively.
We use a voxel grid with a resolution of \SI{0.05}{\meter} per voxel.
The initial occupancy probability is $P(v)=0.5$ for all voxels.
To vary the prior information about the scene, the first views are sampled randomly in each experiment and are the same for all methods.
We use a resolution of 0.1 rays per pixel and a maximum range of \SI{10}{\meter} for raytracing.

We record the explored volume and the surface coverage as a function of the number of views per camera.
We compare these values to the ground truth model created from all images.
For computing the surface coverage, a point in the ground truth model is considered observed if there is a point closer than \SI{0.05}{\meter} to it in the reconstruction.


\subsection{Results} 
\label{sub:results}

Fig.~\ref{fig:volume_48} shows for 4 and 8 cameras the average fraction of explored volume as a function of number of views.
The numbers in parentheses in the legend show the micro-averaged area under curve (AUC) value of the respective methods.
Our algorithm reaches 90\% explored volume with between 2-3 fewer views per camera compared to the other two methods.
Single-sensor planning does not coordinate view selection of multiple sensors, and prefers strongly overlapping views in the first 2-4 steps.
Random view selection has no preference for such views, and thus performs better during these steps.
After strongly overlapping views are removed from future consideration, single-sensor planning avoids selecting useless views on subsequent steps and outperforms random over the entire span of 20 views.
Our algorithm avoids overlapping views, outperforming single-sensor planning and random view selection for any number of selected views.

Table~\ref{tab:auc_surface_coverage_synthetic} shows the overall micro-averaged AUC for surface coverage, and the AUC for each layout.
On average, our algorithm observes more surface points than single-sensor planning or randomly selecting views.
As the number of cameras increases, the amount of potential overlap between the views increases.
With 4 or 8 cameras, the performance of single-sensor planning and random are similar, while ours performs significantly better.
This shows our algorithm is able to successfully coordinate views with increasing amount of potential overlap.
Our method performs best in layouts with many rooms and occluding walls, such Bathroom~8, Kitchen~6, and Living room~5 (see Fig.~\ref{fig:environments}).
The improvement is smallest in layouts consisting of a single room such as Kitchen~11, especially when the number of cameras is 8.
Single planning with 4 or 8 cameras also performs almost identically to random in both of the living room layouts.

Fig.~\ref{fig:surface_comparison} shows the surface coverage with 8 cameras in Bathroom~8 and Kitchen~11.
Numbers in parentheses in the legend show the AUC.
Our method performs best for any number of views in the Bathroom~8 layout that has many connected rooms separated by walls that occlude views (see Fig.~\ref{fig:environments}).
In the Kitchen~11 layout consisting of a single room with no occluding walls, our method still performs best with less than 8 views per camera, and then equally well as random view selection.
Single-sensor planning selects strongly overlapping views and performs worse than randomly selecting views.

For both planning algorithms, raytracing to compute the per-voxel scores takes most of the runtime.
The average runtime was \SI{0.8}{\second} per candidate view.


\section{Real-world experiments} 
\label{sec:real_world_experiments}
We set up three scenes: a scene with light clutter (Fig.~\ref{fig:real_scenes}, left), the same scene with a large obstacle added (Fig.~\ref{fig:real_scenes}, right), and a scene with unsorted metal waste (Fig.~\ref{fig:online_scene}, top).
Intel RealSense D435 depth cameras are attached to the two KUKA LBR iiwa robot arms.
For each robot, we sample 20 candidate views around the workspace.
In each view, the camera points to the center of the workspace.
We plan a sequence of 8 views by the proposed multi-sensor NBV planning (Ours), single-sensor NBV planning (Single), or random view selection (Random).
As there are only two cameras, for Ours we find the independent set that maximizes overlap-aware utility directly by evaluating all combinations.
Other experimental settings are as in Section~\ref{sec:simulation_experiments}.

Table~\ref{tab:real_unknown_volume} shows the average unknown volume\footnote{Unknown volume never reaches zero as object insides are unobservable.} (lower is better) and its 95\% confidence interval, after 3, 5, and 8 views in each scene.
In the ``Waste'' scene, Ours outperforms Single.
The views in the scene overlap strongly, and our method avoids redundant views.
In the ``Clutter'' and ``Obstacle'' scenes, there is no significant difference between Ours and Single.
In the ``Obstacle'' scene, the difference of Single, Ours, and Random after 8 views is not significant.
Randomly selecting 8 views is sufficient for a good reconstruction.
In all cases, the random baseline performs worst.

\begin{table}[t]
\caption{Average unknown volume (\SI{}{\cubiccentimetre}) and its 95\% confidence interval in the real-world scenes. Bolded values show all statistically significantly best-performing methods.}
\label{tab:real_unknown_volume}
\begin{center}
\begin{tabular}{@{}llccc@{}}
\toprule
Scene                    & Method  & After 3 views & After 5 views & After 8 views  \\ \midrule
\multirow{3}{*}{Clutter} & Ours & \textbf{11404 $\pm$ 128} & \textbf{7016 $\pm$ 55} & \textbf{5064 $\pm$ 55} \\
                         & Single      & \textbf{11592 $\pm$ 89} & \textbf{7104 $\pm$ 222} & \textbf{4888 $\pm$ 55} \\
                         & Random  & 18844 $\pm$ 3267 & 10018 $\pm$ 1009 & 5869 $\pm$ 426 \\ \midrule
\multirow{3}{*}{Obstacle}& Ours & \textbf{17360 $\pm$ 400} & \textbf{13592 $\pm$ 55} & \textbf{11628 $\pm$ 83} \\
                         & Single      & \textbf{17696 $\pm$ 277} & \textbf{13556 $\pm$ 6} & \textbf{11492 $\pm$ 128} \\
                         & Random  & 24627 $\pm$ 3960 & 15954 $\pm$ 1226 & \textbf{12064 $\pm$ 390} \\ \midrule
\multirow{3}{*}{Waste}   & Ours & \textbf{14296 $\pm$ 133} & \textbf{6400 $\pm$ 0} & \textbf{4476 $\pm$ 39}  \\
                         & Single      & 16584 $\pm$ 33 & 9260 $\pm$ 139 & 4984 $\pm$ 11 \\
                         & Random  & 20274 $\pm$ 2887 & 10549 $\pm$ 1144 & 5210 $\pm$ 547  \\ \bottomrule
\end{tabular}
\end{center}
\end{table}

\begin{figure}[t]
  \centering
  \includegraphics[width=0.5\columnwidth]{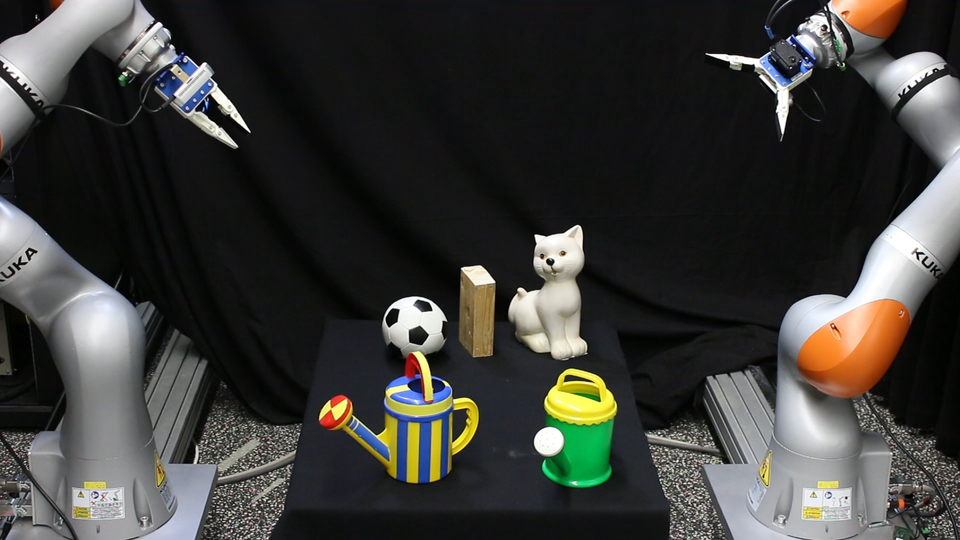}~
  \includegraphics[width=0.5\columnwidth]{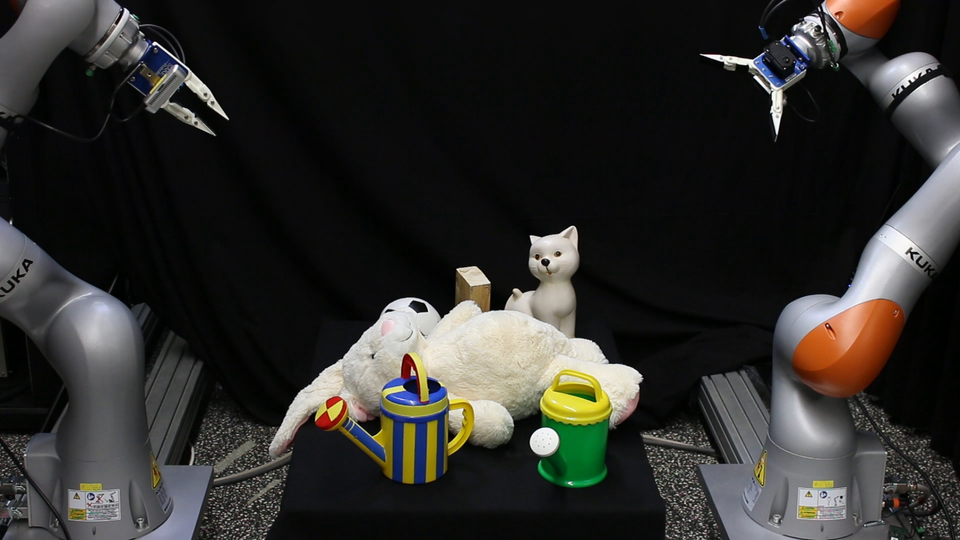}
  \caption{The clutter scene (left) and the obstacle scene (right).}
  \label{fig:real_scenes}
\end{figure}

\begin{figure*}[!ht]
  \centering
  \subfloat[Initial view]{\includegraphics[width=0.33\textwidth]{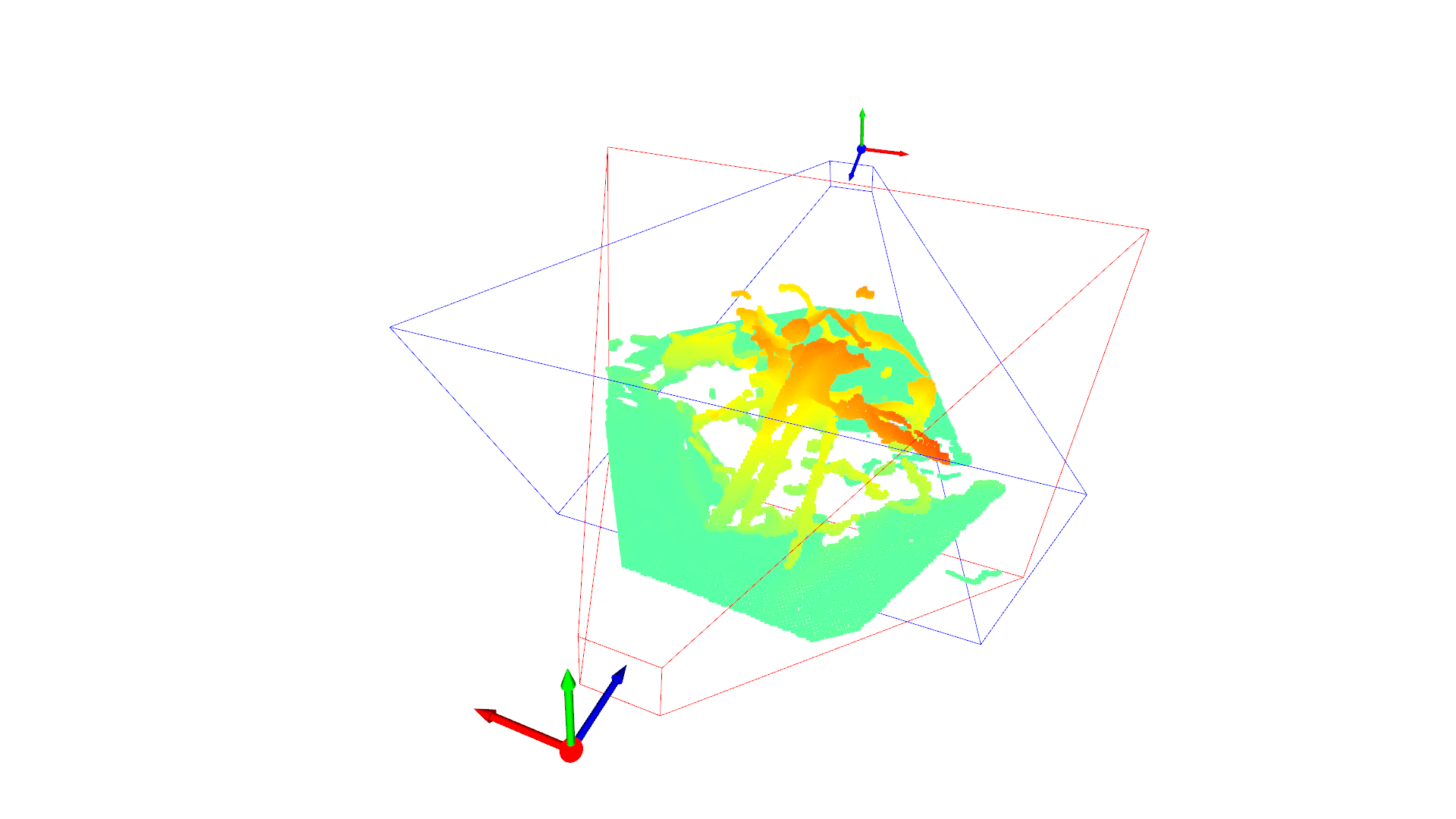}}%
  \subfloat[Ours]{\label{fig:waste_ours}\includegraphics[width=0.33\textwidth]{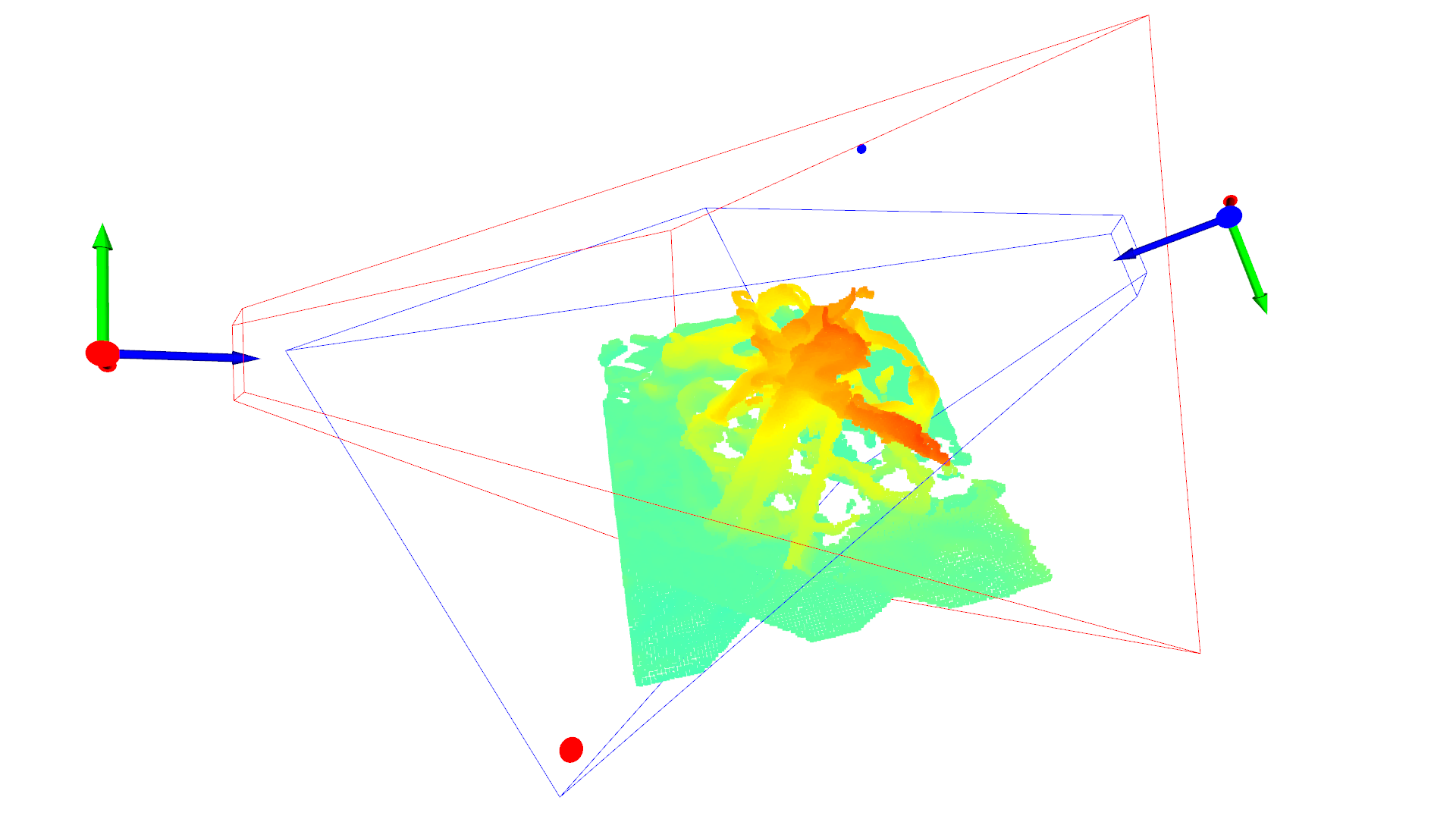}}%
  \subfloat[Single]{\label{fig:waste_ve}\includegraphics[width=0.33\textwidth]{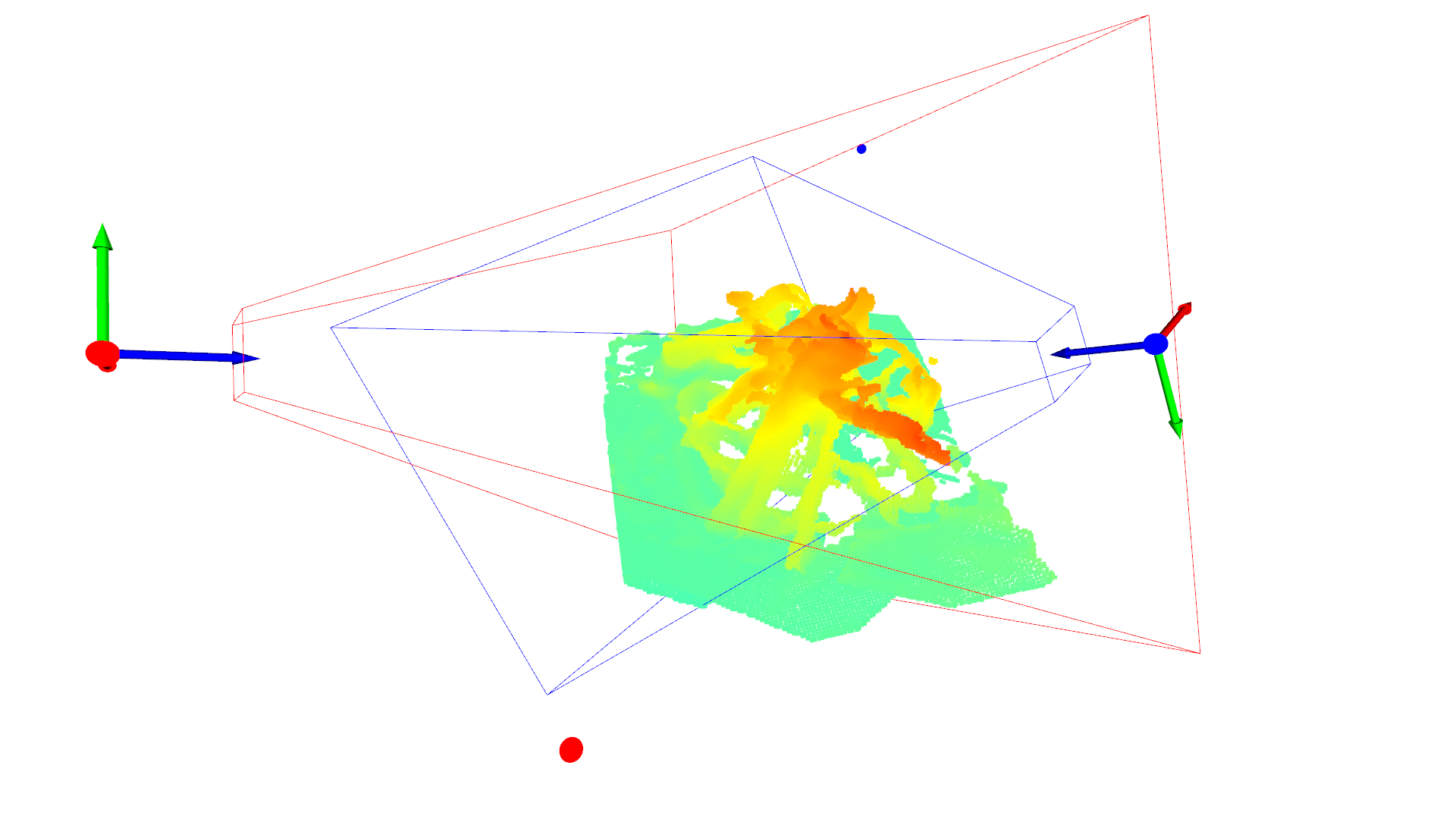}}
  \caption{Qualitative results in the ``Waste'' scene. The left subfigure shows the initial reconstruction and views (red and blue frustum). The middle and right subfigures show the second views selected by Ours and Single, respectively. The reconstruction color corresponds to height above the tabletop. The coordinate axes indicate the current pose of each sensor. Compared to Single, our method selects a view for sensor 2 (blue frustum) that avoids overlap with sensor 1 (red frustum) and observes the region to the bottom left.}
  \label{fig:waste_comparison}
\end{figure*}

In all scenes, Ours selected the same sequence of views in each repetition.
This was the case for Single as well.
In Table~\ref{tab:real_unknown_volume} the variation for Ours and Single is due to unpredictable events such as localization or sensor noise.

Fig.~\ref{fig:waste_comparison} qualitatively compares Ours and Single.
Our method explicitly considers that the two sensors' fields of view overlap, and selects views with less overlap.
Single maximizes utility for each sensor independently, and selects views that overlap resulting in less explored volume.
After the two views, the remaining unknown volume was \SI{34000}{\cubiccentimetre} (Ours) and \SI{38000}{\cubiccentimetre} (Single).

\section{Discussion and Conclusion} 
\label{sec:conclusion}
Several topics beyond the scope of this paper remain unexplored.
Future work could investigate how the granularity of the candidate view sampling affects performance.
Considering more candidate viewpoints is guaranteed to improve performance, but it might not be beneficial in practice due to the increased computational cost.
Dynamic generation of candidate viewpoints could also be considered, e.g., similar to~\cite{Bircher2016}.
Many formulations of information gain have been proposed and compared in single-sensor object reconstruction~\cite{delmerico2018comparison}, while we applied only the entropy score.
Comparing information gain formulations in the multi-sensor case is a direction for future work.
The demands for NBV planning vary depending on the application~\cite{Border2018,Sukkar2019,Cui2019}, and studying the quality of the reconstructions with respect to the parameters of the algorithm can help fit our proposed method to specific applications.

In conclusion, we propose a monotonically increasing and submodular overlap-aware utility function for multi-sensor next-best-view planning with a volumetric environment representation.
Our greedy planning algorithm is guaranteed to produce a solution within a constant factor from the optimum.
Experimental results show the algorithm avoids planning overlapping views, outperforms independent single-sensor planning and random view selection methods, and can be implemented in a real-world robotic system.

\ifCLASSOPTIONcaptionsoff
  \newpage
\fi



%



\bibliographystyle{IEEEtran}
\bibliography{refs}

%








\end{document}